\documentclass{article}
\usepackage{spconf,amsmath,graphicx}

\usepackage{cite}
\usepackage{amsmath,amssymb,amsfonts,amsthm}
\usepackage{algorithmic}
\usepackage{graphicx}
\usepackage{textcomp}
\usepackage{xcolor}
\usepackage{hyperref}
\hypersetup{
    colorlinks=true,      
    linkcolor=blue,       
    citecolor=green,      
    urlcolor=cyan         
}

\def\x{{\mathbf x}}

\def\BibTeX{{\rm B\kern-.05em{\sc i\kern-.025em b}\kern-.08em
    T\kern-.1667em\lower.7ex\hbox{E}\kern-.125emX}}

\DeclareMathOperator*{\argmin}{arg\,min}
\def\Z{\mathcal{Z}}
\def\R{\mathcal{R}}
\def\x{\mathbf{x}}
\def\z{\mathbf{z}}
\def\q{\mathbf{q}}
\def\I{\Omega}
\newtheorem{thm}{Theorem}

\def\ie{{\it i.e.}}
\def\eg{{\it e.g.}}

\usepackage{xcolor}
\newcommand{\zf}[1]{\textcolor{black}{#1}}

\title{Tamper-evident Image using JPEG Fixed Points}
%
\name{Zhaofeng Si, Siwei Lyu}
\address{University at Buffalo, State University of New York, USA}
%
%
%
\begin{document}
%
\maketitle
\begin{abstract}
An intriguing phenomenon about JPEG compression has been observed since two decades ago- after repeating JPEG compression and decompression, it leads to a stable image that does not change anymore, which is a fixed point. In this work, we prove the existence of fixed points in the essential JPEG procedures. We analyze JPEG compression and decompression processes, revealing the existence of fixed points that can be reached within a few iterations. These fixed points are diverse and preserve the image's visual quality, ensuring minimal distortion. This result is used to develop a method to create a tamper-evident image from the original authentic image, which can expose tampering operations by showing deviations from the fixed point image.
\end{abstract}
\begin{keywords}
Tamper-evident image, JPEG compression, fixed points
\end{keywords}
%
\section{Introduction}

Since becoming an IEEE standard in 1992, the {\it Joint Photographic Experts Group} (JPEG) format \cite{wallace1992jpeg} has become the dominant image format due to its efficient compression and broad compatibility. JPEG employs a compression method that significantly reduces file size while maintaining visual quality. This is achieved by converting the image into the frequency domain using the Discrete Cosine Transform (DCT), quantizing the frequency coefficients to discard less noticeable details, and encoding the resulting data using Huffman coding. This combination of techniques makes JPEG particularly suitable for storing and sharing photographic images, striking a balance between compression efficiency and visual fidelity. It is estimated that over 70\% of all images are stored in the JPEG format \cite{jpeg2020usage}. 

An intriguing property of JPEG was first observed and elucidated by \cite{huang2010detecting} in the context of identifying features for detecting image manipulation. The study reveals that an image becomes unchanged after undergoing several rounds of the same JPEG compression and decompression process. In other words, if a single cycle of JPEG compression and decompression is considered a transformation of the image, referred to as a {\it JPEG transform}, then this transform exhibits the property of having {\it fixed points}, \ie, images that remain unaltered when the JPEG transform is applied. This phenomenon is visually illustrated in Fig.~\ref{fig:box_quality} using an example image.
\begin{figure}[t]
    \centering
    \includegraphics[width=1\linewidth]{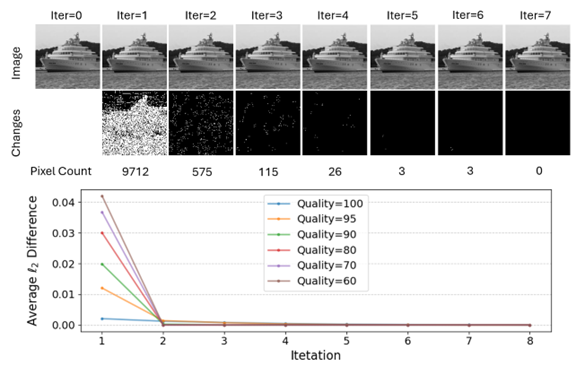}
    ~\vspace{-2em}
    \caption{\it \small A demonstration of JPEG fixed points is presented. {\bf (Top)}: An example image approaches its JPEG fixed point through repeated JPEG transforms, along with the locations and numbers of pixels that change in each iteration. {\bf (Bottom)}: The pixel-wise $\ell_2$ distance between images from consecutive iterations of JPEG transforms is plotted for the example image in the top row under varying JPEG compression qualities.}
    \label{fig:box_quality}
    ~\vspace{-2em}
\end{figure}

\begin{figure*}[t]
    \centering
    \includegraphics[width=1\linewidth]{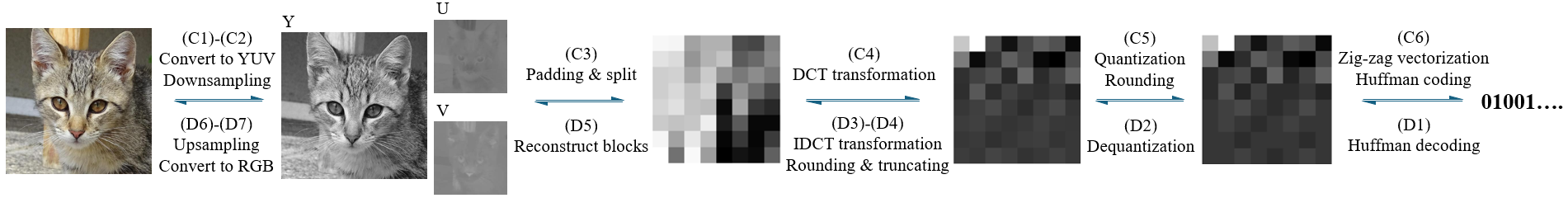}
    ~\vspace{-2em}
    \caption{\it \small The overall pipeline of JPEG compression and decompression.}
    \label{fig:jpeg_compression}
    ~\vspace{-2.5em}
\end{figure*}

In this work, we aim to conduct a rigorous investigation into the fixed point phenomenon of the JPEG transform. Specifically, we provide a mathematical formulation of the JPEG transform and prove the following: (a) the existence of fixed points and (b) that these fixed points can be reached within a finite number of steps starting from any arbitrary image. Using numerical experiments, we validate both the existence of JPEG fixed points and the convergence behavior of images toward them.

In addition, we leverage the JPEG fixed point phenomenon to develop a novel image integrity authentication scheme. This scheme enables the creation of {\it tamper-evident} images \cite{li2014image}, where any modifications to the image induce detectable changes, making tampering readily apparent. We demonstrate that tamper-evident images can be constructed from JPEG fixed points. Any alterations to the image will cause deviations from the JPEG fixed points, which can be detected as changes in the JPEG blocks after a single round of JPEG compression and decompression.

The proposed tamper-evident images based on JPEG fixed points have two advantages. Firstly,
tamper-evident images eliminate the need for external storage of verifiable features, as required by image fingerprinting schemes \cite{c2pa_specification}, or the embedding of hidden traces, as in image watermarking methods. The image itself serves as its proof of authenticity, making the scheme inherently self-evident. Secondly, since JPEG is a widely-used format and often the final step in the image processing pipeline, the proposed method is resilient to JPEG operations. This contrasts with the original approach \cite{li2014image} that may lose integrity traces due to JPEG.


\section{Fixed Points for the JPEG Transform}
\label{sec:main}

\subsection{Overview of JPEG}
\label{sec:jpeg}


The JPEG compression process The JPEG compression process described in \cite{wallace1992jpeg} consists of the following steps, as illustrated in Fig.~\ref{fig:jpeg_compression}:
~\vspace{-.5em}
\begin{itemize} \itemsep -.5em 
\item (C1) Convert the RGB image to a YUV image using RGB-YUV transform \cite{poynton2012digital}. \item (C2) Down-sample the chromatic channels (\ie, U and V channels) by a factor of two. \item (C3) Divide each channel into non-overlapping $8 \times 8$ blocks. If the image dimensions are not a multiple of eight, padding is applied to make them so. 
\item (C4) Perform a {\it discrete cosine transform} (DCT, Section \ref{sec:dct}) on the blocks. \item (C5) Quantize the DCT coefficients using a $8\times 8$ quantization table that contains quantization steps for each DCT channel. \item (C6) Apply Huffman coding to the quantized DCT coefficients in a zigzag order. 
~\vspace{-.75em}
\end{itemize}
The corresponding JPEG decompression process reverses these steps as follows:
~\vspace{-.75em}
\begin{itemize} \itemsep -.5em 
\item (D1) Use Huffman decoding to recover the quantized blocks. \item (D2) De-quantize the quantized blocks. \item (D3) Perform the {\it inverse DCT} (IDCT, Section \ref{sec:dct}) on the de-quantized blocks. \item (D4) Round the block values (introducing precision loss) and truncate them to the valid pixel range. \item (D5) Reconstruct the image channels using the reconstructed blocks. For the image with padding during encoding process, crop each channel to the original size. \item (D6) Up-sample the chromatic channels. \item (D7) Convert the YUV image back to the RGB. 
~\vspace{-.75em}
\end{itemize}

\subsection{Basic Notations and Definitions}

Let $\Z$ and $\R$ denote the sets of integers and real numbers, respectively. For $x \in \R$, the {\it truncation} of $x$ to the range $[a, b] \subset \R$ is defined as: \((x)^a_b := \min(\max(x, a), b),\) which identifies the "closest" value within $[a, b]$ to \(x\). Equivalently, this operation can be expressed as:
\((x)^a_b = a\) if \(x \le a\); \((x)^a_b = x\) if \(a \le x \le b\); and \((x)^a_b = b\) if \(x \ge b\).
The {\it rounding} of \(x\), denoted by \([x]\), is the "closest" integer to \(x\). Formally: \([x] := \argmin_{z \in \Z} |z - x|,\) where \([x]\) is uniquely defined. If \(x\) is equidistant between two consecutive integers \(j\) and \(j+1\), a tie-breaking rule (such as rounding up or rounding down) is applied to ensure a unique solution. For \(q \in \Z\), the \(q\)-{\it quantization} operation is defined as:\([x]_q := q \left\lfloor \frac{x}{q} + 0.5 \right\rfloor,\)
or equivalently, \([x]_q:= q\left[{x \over q}\right],\) where the result is always an integer that is a multiple of \(q\).

With a slight abuse of notation, the truncating and rounding operations can be trivially extended to vectors, where the respective scalar transformations are applied element-wise to all components of a vector $\x \in \R^n$. Similarly, the quantization operation can be defined for vectors with a different quantization factor for each dimension, given by another vector $\q \in \Z^d$. This is expressed as:
$[\x]_{\q} = [\x \oslash \q] \odot \q$,
where $\oslash$ and $\odot$ denote element-wise division and multiplication (Hadamard operations), respectively. Furthermore, the truncating, rounding, and quantizing operations can be overloaded to apply to sets of vectors. For a set $A \subset \R^d$, these operations are defined as follows:
$(A)^b_a = \{({\mathbf a})_a^b \mid {\mathbf a} \in A\}$, 
$[A] = \{[{\mathbf a}] \mid {\mathbf a} \in A\}$, 
$[A]_{\q} = \{[{\mathbf a}]_{\q} \mid {\mathbf a} \in A\}$.
From these definitions, it is straightforward to observe that $\Z = [\R]$ and $\Z^d = [\R^d]$. Additionally, the set $[\Z]_q$ corresponds to the integers that are multiples of $q$.

The three operations can be viewed as projections. It is not hard to show that $(\x)_b^a = \argmin_{\z \in (\R^d)_b^a} \|\z - \x \|$, $[\x] = \argmin_{\z \in \Z^d} \|\z - \x \|$, and $[\x]_{\q} = \argmin_{\z \in [\Z^d]_{\q}} \|\z - \x \|$, where $\|\cdot\|$ denotes the $\ell_2$ (Euclidean) norm\footnote{This definition generalizes to any $p$-norm with $p \geq 1$.}. In addition, we have $([\x])_{b}^{a} = \argmin_{\z \in (\Z^d)_a^{b}} \|\z - \x \|$. The minima in these definitions can be made unique by introducing symmetry-breaking measures, as in the rounding operation.

\subsection{DCT and IDCT Matrices}
\label{sec:dct}

The most important steps in the JPEG compression and decompression process are DCT and IDCT, which are linear transforms applied to $8 \times 8$ pixel blocks. The $8$-point DCT can be represented as an $8 \times 8$ orthonormal matrix, given by:
$P_{kn} = {1 \over 2} \cos\left({\pi \over 8} (n+1/2)k\right)$.
The corresponding IDCT is represented by the transpose of this matrix, satisfying $P^\top P = P P^\top = I$.

For an $8 \times 8$, $8$-bit grayscale image block $X \in (\Z^{8 \times 8})_0^{255}$, represented as an $8 \times 8$ matrix, its DCT and IDCT are computed as $PXP^\top$ and $P^\top \tilde{X}P$, respectively. JPEG compression with a quantization matrix $Q$ is defined as:
$\tilde{X} = [(D^\top X D) \oslash Q]$,
and the corresponding JPEG decompression is:
$(D([\tilde{X} \otimes Q]D^\top))_{0}^{255}$. A single round of JPEG compression and decompression applied to an $8 \times 8$ pixel block, referred to as the {\it JPEG transform} in this work, is given by:
$T_{Q}(X) = ([D [D^\top X D]_Q D^\top])_{0}^{255}$. {In practical implementations of JPEG compression, the normalizing factor may be split between the DCT and IDCT matrices to improve numerical stability and efficiency. However, this adjustment does not affect the JPEG transform.}

It is more convenient to use the vectorized form of the JPEG transform. Let $\x = \text{vec}(X)$ and $\q = \text{vec}(Q)$. Additionally, we define a $64 \times 64$ orthonormal matrix $D = P \otimes P^\top$, where $\otimes$ denotes the Kronecker product. Using these definitions, the JPEG transform can be written more concisely as:
\begin{equation}
T_{\q}(\x) = ([D^\top[D\x]_{\q}])_0^{255}.
\label{jpeg}
\end{equation}
Note that $T_{\q}$ is an integer transform, converting a vector of integers to another vector of integers.

\subsection{JPEG Fixed Points of $8\times 8$ Blocks}

We define $\I_0$ as the set of all $64$-D vectors corresponding to {\it vectorized} $8 \times 8$ grayscale image blocks with $8$-bit pixels (subsequently referred to as the {\it image block} for simplicity)\footnote{The actual value range of the pixel values is not important, so we can use the original values $\{0, \cdots, 255\}$ or the centered values $\{-128, \cdots, 0, \cdots, 127\}$.}. The output of the JPEG transform of the elements in $\I_0$ forms the set $\I_1$, defined as $\I_1 = \{T_{\q}(\z) \mid \z \in \I_0\}$. Repeating this process, we can further define $\I_{t+1} = \{T_{\q}(\z) \mid \z \in \I_t\}$. 

\begin{thm}
$\I_t$ is finite and $\I_0 \supseteq \I_1 \supseteq \cdots \I_{t-1} \supseteq \I_{t}$.  
\label{thm1}
\end{thm}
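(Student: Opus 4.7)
The plan is to observe that finiteness and the nesting chain both reduce to one structural fact: the JPEG transform $T_\q$ maps $\I_0$ into itself. Since $\I_0 = \{0,1,\ldots,255\}^{64}$ is the set of vectorized $8\times 8$ blocks with $8$-bit integer entries, it is finite with $|\I_0| = 256^{64}$. Then every $\I_t$ with $t \ge 1$ will be obtained as an image of $\I_0$ under $t$-fold iteration of $T_\q$, hence a subset of a finite set.

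First I would verify that $T_\q(\z) \in \I_0$ for every $\z \in \I_0$. Reading off the definition $T_\q(\x) = ([D^\top [D\x]_\q])_0^{255}$ from (\ref{jpeg}), the outermost truncation $(\cdot)_0^{255}$ forces each coordinate into $[0,255]$, while the outer rounding $[\,\cdot\,]$ forces each coordinate to be an integer; together these land the output in $\{0,\ldots,255\}^{64}=\I_0$. The order of the two outer projections matters only for the specific output value, not for the fact that the result lies in $\I_0$, since truncating an integer in $[0,255]$ still yields an integer in $[0,255]$. Hence $T_\q(\I_0) \subseteq \I_0$, which is exactly $\I_1 \subseteq \I_0$.

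Next I would establish monotonicity of the set operator $A \mapsto T_\q(A) = \{T_\q(\z) \mid \z \in A\}$: if $A \subseteq B$, then $T_\q(A) \subseteq T_\q(B)$, which is immediate from the definition. With this in hand, the chain $\I_0 \supseteq \I_1 \supseteq \I_2 \supseteq \cdots$ is proved by induction on $t$. The base case $\I_1 \subseteq \I_0$ is the previous paragraph. For the inductive step, assuming $\I_t \subseteq \I_{t-1}$, apply the set operator $T_\q(\cdot)$ to both sides to obtain $\I_{t+1} = T_\q(\I_t) \subseteq T_\q(\I_{t-1}) = \I_t$.

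Finally, finiteness of each $\I_t$ follows instantly: $\I_t \subseteq \I_0$ and $|\I_0| = 256^{64} < \infty$. I do not anticipate any real obstacle here; the only subtlety worth being explicit about is the placement of the truncation and rounding in the definition of $T_\q$, since this is what guarantees that one application of the JPEG transform keeps us inside the finite cube $\I_0$ rather than drifting into $\R^{64}$. Once that point is made clean, the rest is bookkeeping with monotone set images and induction.
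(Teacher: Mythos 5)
Your proposal is correct and follows essentially the same route as the paper: both arguments reduce the chain to the fact that $T_{\q}$ maps $\I_0$ into itself (via the outer rounding and truncation in Eq.~\eqref{jpeg}) and then run an induction on $t$, with your monotonicity of the set image $A \mapsto T_{\q}(A)$ being the same element-chasing step the paper writes out explicitly. Your version is slightly more careful about why one application of $T_{\q}$ lands back in $\{0,\ldots,255\}^{64}$, but there is no substantive difference.
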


\begin{proof}
We prove this by induction. First, since the JPEG-transformed image block is another image block, we have $\I_t \subseteq \I_0$. Next, assume $\I_{t} \subseteq \I_{t-1}$. For $\x_{t+1} \in \I_{t+1}$, there exists $\x_t \in \I_t \subseteq \I_{t-1}$ such that $\x_{t+1} = T_{\q}(\x_t)$. Because $\x_t \in \I_{t-1}$, we have $\x_{t+1} \in \I_t$, therefore $\I_{t+1} \subseteq \I_t$. Finally, since $\I_0$ is a finite set\footnote{The total number of elements in $\I_0$ is $2^{2^{2^3}} \approx 1.15792089 \times 10^{77}$. For comparison, the estimated number of atoms on Earth is $1.33 \times 10^{50}$.}, all $\I_t$ are also finite sets.
\end{proof}

\begin{thm}
$\forall \x_0 \in \I_0$ and a sequence $\x_0 \rightarrow \x_1 \cdots \rightarrow \x_t$ with $\x_{t+1} = T_{\q}(\x_t)$, $\exists \tau \in \Z$ and $\tau < \infty$, such that for $t \geq \tau$, $\x_t = \x_{t+1}$.  
\label{thm2}
\end{thm}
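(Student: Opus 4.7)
The plan is to introduce a Lyapunov-style potential function measuring the DCT-domain quantization error and argue that it is non-increasing along the iterates of $T_{\q}$---and strictly decreasing unless a fixed point has already been reached. Combined with the finiteness of $\I_0$ (Theorem~\ref{thm1}), this forces the orbit to stop changing after finitely many iterations. Concretely, I define
\[
\phi(\x) \;:=\; \|D\x - [D\x]_{\q}\|^2,
\]
the squared Euclidean distance from the DCT coefficients of $\x$ to their nearest point of the quantization lattice $[\Z^{64}]_{\q}$.

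Next I argue that $\phi(\x_{t+1}) \leq \phi(\x_t)$ by chaining the projection characterizations of the quantizer and of the composite rounding--clipping operator (both are nearest-point projections onto their target sets) together with the isometry property $\|Dv\|=\|v\|$ of the orthonormal matrix $D$. Writing $\x_{t+1}=([D^\top[D\x_t]_{\q}])_0^{255}$, the chain is
\begin{align*}
\phi(\x_{t+1}) &= \|D\x_{t+1}-[D\x_{t+1}]_{\q}\|^2 \\
&\leq \|D\x_{t+1}-[D\x_t]_{\q}\|^2 \\
&= \|\x_{t+1}-D^\top[D\x_t]_{\q}\|^2 \\
&\leq \|\x_t-D^\top[D\x_t]_{\q}\|^2 \\
&= \|D\x_t-[D\x_t]_{\q}\|^2 \;=\; \phi(\x_t).
\end{align*}
The first inequality holds because $[D\x_{t+1}]_{\q}$ is the closest point of $[\Z^{64}]_{\q}$ to $D\x_{t+1}$, so any other lattice point---in particular $[D\x_t]_{\q}$---is no closer; the second holds because $\x_{t+1}$ is the closest point of $(\Z^{64})_0^{255}$ to $D^\top[D\x_t]_{\q}$ while $\x_t\in\I_0=(\Z^{64})_0^{255}$ is merely a competitor in that set; the remaining equalities use $DD^\top = I$.

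The critical step is the equality analysis. Under the tie-breaking convention that makes $([\,\cdot\,])_0^{255}$ a single-valued projection onto $(\Z^{64})_0^{255}$, the nearest point of $(\Z^{64})_0^{255}$ to any real vector is unique, so equality in the second inequality above forces $\x_{t+1}=\x_t$. Contrapositively, whenever $\x_{t+1}\neq\x_t$ we must have $\phi(\x_{t+1})<\phi(\x_t)$. Because the entire orbit lies in the finite set $\I_0$, the values $\phi(\x_t)$ live in the finite set $\{\phi(\z):\z\in\I_0\}$, and a strictly decreasing sequence inside a finite set cannot continue forever. Hence some $\tau<\infty$ satisfies $\x_\tau=\x_{\tau+1}=T_{\q}(\x_\tau)$, and applying $T_{\q}$ to this identity repeatedly gives $\x_t=\x_\tau$ for every $t\geq\tau$, which is exactly the claim.

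The main obstacle I anticipate is making the equality analysis airtight: the whole argument hinges on promoting the coordinatewise nearest-integer map (with the stated tie-breaking) followed by clipping into a genuinely single-valued projection, so that two \emph{distinct} integer vectors in $(\Z^{64})_0^{255}$ cannot both minimize the distance to the same real target. This is implicit in the definitions established earlier, but it needs to be invoked explicitly to upgrade ``$\phi$ is non-increasing'' to ``$\phi$ strictly decreases until a fixed point is hit,'' which is what turns a pigeonhole observation into the stronger fixed-point conclusion required by the theorem.
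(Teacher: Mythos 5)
Your proposal is correct and follows essentially the same route as the paper: your potential $\phi(\x_t)$ is exactly the square of the paper's quantization error $\epsilon_t$, your two-step chain through $\|D\x_{t+1}-[D\x_t]_{\q}\|$ reproduces the paper's intermediate quantity $\eta_{t+1}$ with the identical pair of projection inequalities, and the equality-implies-$\x_t=\x_{t+1}$ step plus finiteness pigeonhole matches the paper's argument. The only (welcome) addition is that you explicitly verify persistence of the fixed point for all $t\geq\tau$, which the paper defers to Theorem~\ref{thm3}.
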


\begin{proof}
We define
$\epsilon_t  =  \|D\x_t - [D\x_t]_{\q}\|= \|\x_t - D^{\top}[D\x_t]_{\q}\|$, 
as the quantization error of the DCT of $\x_t$, $D\x_t$, and
$\eta_{t+1}  =  \|D\x_{t+1} - [D\x_t]_{\q}\| = \|\x_{t+1} - D^\top[D\x_t]_{\q}\|$, 
which is the rounding and truncating error of the IDCT of the quantized DCT block, $D^\top[D\x_t]_{\q}$. By definition, we have $\x_{t+1} = ([D^\top[D\x_t]_{\q}])_0^{255}$. The additional equalities in the definitions result from the fact that $D$ is orthonormal, which leaves the $\ell_2$ distance unchanged.

We show $\eta_{t+1} \leq \epsilon_t$, which follows from
$\eta_{t+1} = \|\x_{t+1} - D^\top[D\x_t]_{\q}\| \leq \|\x_t - D^\top[D\x_t]_{\q}\| = \epsilon_t$,
where the inequality is due to the projection property of the rounding and truncation operations: $\x_{t+1} = ([D^\top [D \x_t]_{\q}])_0^{255} = \argmin_{\z \in \I_0} \|\z - D^\top [D \x_t]_{\q}\|$.

In addition, we have $\epsilon_{t+1} \leq \eta_{t+1}$, as
$\epsilon_{t+1} = \|D\x_{t+1} - [D\x_{t+1}]_{\q}\| \leq \|D\x_{t+1} - [D\x_t]_{\q}\| = \eta_{t+1}$,
where the inequality is due to the projection property of the quantization operation: $[D \x_{t+1}]_{\q} = \argmin_{\z \in [\I_0]_{\q}} \|D \x_{t+1} - \z\|$.
Combining these results, we show that $\epsilon_0 \geq \eta_1 \geq \epsilon_1 \geq \eta_2 \cdots \geq 0$ is a non-increasing sequence. 

Next, when $\eta_{t+1} = \epsilon_t$, we have
$\|\x_t - D^{\top}[D\x_t]_{\q}\| = \|\x_{t+1} - D^\top[D\x_t]_{\q}\| = \min_{\z \in \I_0} \|\z - D^\top[D\x_t]_{\q}\|$,
and by the uniqueness of the minimum, $\x_t = \x_{t+1}$. 

Finally, we show that the convergence occurs in a finite number of steps. Since $\epsilon_t$ and $\eta_t$ can take only a finite number of values, as they are differences between elements of two finite sets, the non-increasing sequence can only change values finitely many times. Hence, equality will eventually occur within a finite number of iterations.
\end{proof}

Theorem \ref{thm2} is validated with numerical experiments based on $1$M random $8 \times 8$ image blocks of $8$-bit pixels, as shown in Fig.~\ref{fig:differences}. The plots illustrate $\epsilon_t - \eta_{t+1}$ (blue) and $\epsilon_{t+1} - \eta_{t+1}$ (red). Note that $\epsilon_t - \eta_{t+1}$ remains positive until convergence to zero, while $\epsilon_{t+1} - \eta_{t+1}$ stays below zero until convergence to zero. These observations are consistent with the theoretical prediction that $\epsilon_t \geq \eta_{t+1} \geq \epsilon_{t+1}$.

\begin{figure}
    \centering
    \includegraphics[width=.9\linewidth]{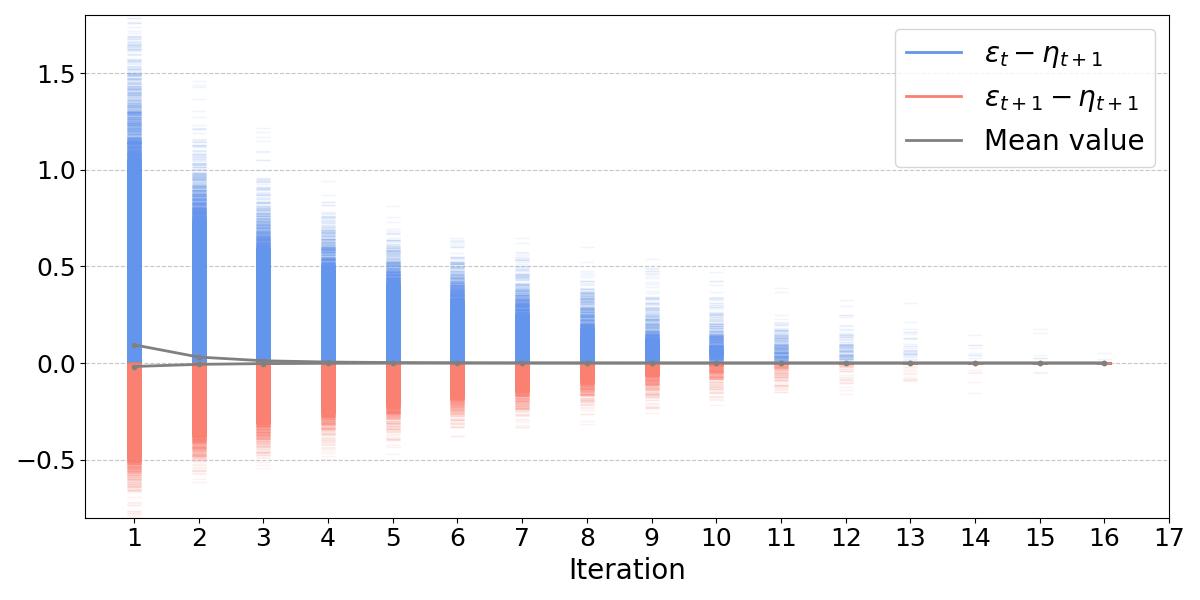}
    ~\vspace{-1.7em}
    \caption{\it \small {The distribution of $\epsilon_t-\eta_{t+1}$ and  $\epsilon_{t+1}-\eta_{t+1}$ from 1M $8\times 8$ $8$-bit image blocks. See details in texts.}}
    \label{fig:differences}
    ~\vspace{-2em}
\end{figure}
Theorems \ref{thm1} and \ref{thm2} are the foundation of the following theorem, which is the main theoretical result of this work.
\begin{thm}
The JPEG transform defined in Eq.\eqref{jpeg} has {\it fixed points}, which is defined as $\x \in \I_0$ such that $\x = T_{\q}(\x)$. All fixed points form a set $\I^{\star} = \bigcap_{t=0}^{\infty} \I_t$. Furthermore, $\forall \x_0 \in \I_0$, the sequence starting at $\x_0$, with $\x_{t+1} = T_{\q}(\x_{t})$, converges to a fixed point $\x^\star \in \I^\star$ in finite steps. Equivalently, $\exists t < \infty$, $\I^\star = \I_t$.   \label{thm3} 
\end{thm}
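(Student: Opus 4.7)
The plan is to combine Theorems \ref{thm1} and \ref{thm2} with the finiteness of $\I_0$ to simultaneously establish existence of fixed points, identify $\bigcap_t \I_t$ with the fixed-point set $\I^\star$, and show that the decreasing chain stabilizes in finitely many steps.

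First I would establish existence by picking any $\x_0 \in \I_0$ and invoking Theorem \ref{thm2} to obtain some $\tau(\x_0) < \infty$ with $\x_{\tau(\x_0)} = T_{\q}(\x_{\tau(\x_0)})$, so $\x_{\tau(\x_0)}$ is a fixed point and $\I^\star$ is nonempty. This already handles the second sentence of the theorem (convergence of every orbit in finite time), since Theorem \ref{thm2} gives exactly this.

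Next, to upgrade the pointwise stopping times into a uniform one, I would use the finiteness of $\I_0$ from Theorem \ref{thm1} and define $\tau^\star := \max_{\x_0 \in \I_0} \tau(\x_0)$, which is finite as the maximum of a finite collection of finite integers. For every $t \geq \tau^\star$, each element of $\I_t$ has the form $T_{\q}^t(\x_0)$ for some $\x_0 \in \I_0$, and by the choice of $\tau^\star$ this value equals the fixed point reached by the orbit of $\x_0$. Hence $\I_t$ consists only of fixed points. Conversely, any fixed point $\x^\star$ satisfies $\x^\star = T_{\q}^t(\x^\star) \in \I_t$ for every $t$ by a trivial induction, yielding the reverse inclusion. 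This proves the equivalent statement $\I^\star = \I_{\tau^\star}$, and since the nested chain $\I_0 \supseteq \I_1 \supseteq \cdots$ from Theorem \ref{thm1} is constant past $\tau^\star$, it also gives $\I^\star = \bigcap_{t \geq 0} \I_t$.

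The main obstacle I anticipate is the step from pointwise to uniform stopping: Theorem \ref{thm2} by itself only gives a finite $\tau(\x_0)$ whose dependence on the starting block is not quantified, and without some form of uniformity one cannot conclude that a single $\I_t$ equals the fixed-point set. The saving grace is Theorem \ref{thm1}: the maximum is taken over the finite set $\I_0$, so $\tau^\star$ is automatically finite. Once this uniform bound is in hand, both the characterization $\I^\star = \bigcap_t \I_t$ and the stabilization $\I^\star = \I_t$ for $t \geq \tau^\star$ drop out with no further estimates.
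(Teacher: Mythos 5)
Your proposal is correct and follows essentially the same route as the paper: invoke Theorem \ref{thm2} for pointwise finite-time convergence, take the maximum stopping time over the finite set $\I_0$ (finiteness from Theorem \ref{thm1}) to get a uniform $\tau^\star$, and conclude $\I^\star = \I_{\tau^\star} = \bigcap_t \I_t$. If anything, you are slightly more careful than the paper in spelling out the reverse inclusion (that every fixed point lies in every $\I_t$) and in justifying why the maximum stopping time is finite.
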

\begin{proof}
First, we show that when we have $\x_t = \x_{t+1}$, then all $\x_{t+k} = \x_t$ for $k \in \Z$. It suffices to show $\x_t = \x_{t+1} \rightarrow \x_{t+1} = \x_{t+2}$. Note that $\x_{t+2} = \argmin_{\z \in \I_0}\|\z - D^\top[D\x_{t+1}]_{\q}\|$. The RHS equals to $\argmin_{\z \in \I_0}\|\z - D^\top[D\x_{t}]_{\q}\|$ because we have $\x_t = \x_{t+1}$. This is just the definition of $\x_{t+1}$. Hence, we prove $\x_{t+1} = \x_{t+2}$. Then, subsequent $\x_{t+k}$ values will all be the same by induction. When that occurs, $\x_t = \x_{t+1} = T_{\q}(\x_t)$, so $\x_t$ is a fixed point. Using Theorem \ref{thm2}, we know that a fixed point can be reached in finite step from any $\x_0 \in \I_0$. Take the maximum number of steps for any $\x_0$ to reach a fixed point as t, we have $\I^\star = \I_t$.  
\end{proof}
The convergence of the JPEG fixed point described in Theorem \ref{thm3} is demonstrated in Figure \ref{fig:delta_patch} using the same set of $1$M image blocks with the graph of $\ell_2$ differences between the reconstructed patches of consecutive steps $\delta_t = \|\x_t - \x_{t+1}\|$ with different compression qualities (\ie, different quantization matrices). As these results show, the differences in consecutive image patches after repeated JPEG transforms continue to reduce and eventually diminish when a fixed point has been reached. 
%
\begin{figure}
    \centering
    \includegraphics[width=.9\linewidth]{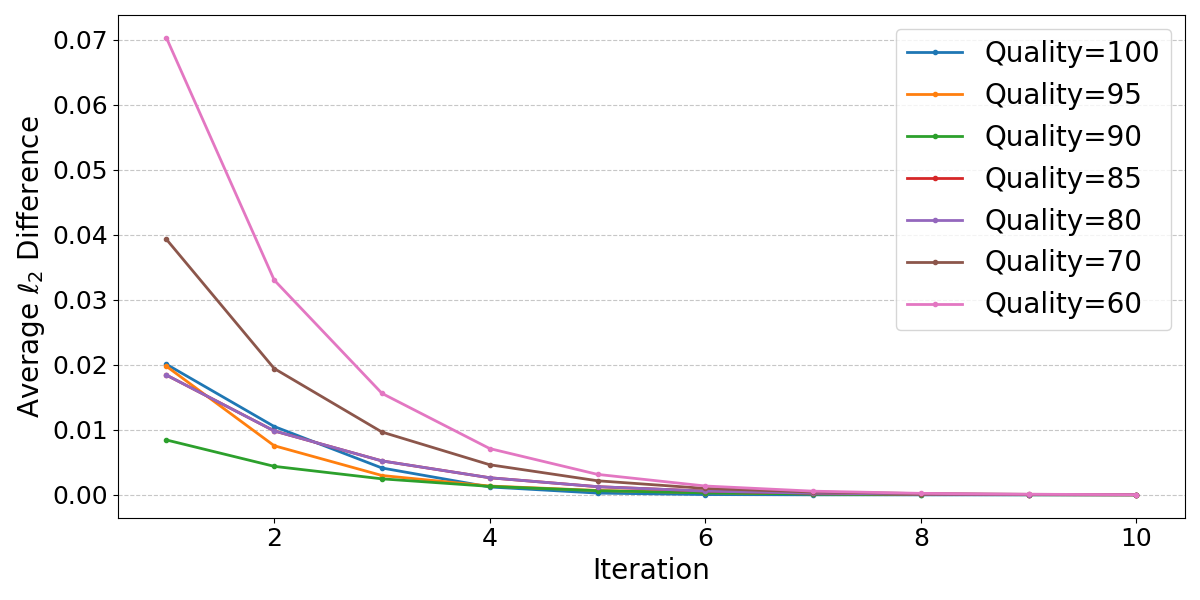}
    ~\vspace{-1.7em}
    \caption{\it \small \zf{The $\ell_2$ difference between consecutive patches for 1M $8 \times 8$ patches with various compression quality. The starting point of the process of finding fixed point is single JPEG compressed patch.}}
    \label{fig:delta_patch}
    ~\vspace{-2em}
\end{figure}
For $\forall \x \in \I_0$, let $\x^\star$ denote its corresponding fixed point of $T_{\q}$ and define $\delta = \max_{\x \in \I_0}\|\x - \x^\star\|$, \ie, it is the upper-bound of any image block to its corresponding JPEG fixed point. The following theorem shows that image blocks sufficiently different from each other will converge to distinct JPEG fixed points.  
\begin{thm}
For $\x, \x' \in \I_1$ (\ie, they are already in JPEG formats) with their corresponding JPEG fixed points $\x^\star$ and $\x'^\star$, if $2\delta < \|\x - \x'\|$, then $\x'^\star \neq \x^\star$.
\label{thm4}
\end{thm}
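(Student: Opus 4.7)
The plan is to argue by contrapositive using the triangle inequality, since the statement is essentially a separation bound controlled by the worst-case distance $\delta$ from any block to its fixed point. I would suppose toward contradiction that $\x^\star = \x'^\star$ and then bound $\|\x - \x'\|$ from above in terms of $\delta$.

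Specifically, I would first recall that $\x, \x' \in \I_1 \subseteq \I_0$, so the definition $\delta = \max_{\z \in \I_0}\|\z - \z^\star\|$ applies, giving the two pointwise estimates $\|\x - \x^\star\| \leq \delta$ and $\|\x' - \x'^\star\| \leq \delta$. Next, I would invoke the triangle inequality in $\ell_2$:
\begin{equation*}
\|\x - \x'\| \leq \|\x - \x^\star\| + \|\x^\star - \x'^\star\| + \|\x'^\star - \x'\|.
\end{equation*}
Under the assumption $\x^\star = \x'^\star$, the middle term vanishes, yielding $\|\x - \x'\| \leq 2\delta$, which directly contradicts the hypothesis $2\delta < \|\x - \x'\|$. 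Therefore $\x^\star \neq \x'^\star$.

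There is essentially no technical obstacle here: the result is a clean consequence of Theorem \ref{thm3} (which guarantees existence of a unique fixed point reachable from every block in finite steps, thereby making $\delta$ well-defined as a finite maximum over the finite set $\I_0$) together with the metric triangle inequality. The only subtle point worth flagging explicitly in the write-up is that the hypothesis $\x, \x' \in \I_1$ rather than $\I_0$ is not actually needed for the argument above to go through; it is included presumably because in the intended application the candidate images have already undergone one round of JPEG, and restricting to $\I_1$ may allow a tighter constant $\delta_1 = \max_{\z \in \I_1}\|\z - \z^\star\| \leq \delta$ to be substituted if desired. I would note this in passing but keep the proof with the stated $\delta$ for consistency with the theorem statement.
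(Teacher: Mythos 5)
Your proof is correct and follows essentially the same route as the paper's: both argue by contradiction from $\x^\star = \x'^\star$, bound $\|\x - \x^\star\| + \|\x' - \x'^\star\| \le 2\delta$ by the definition of $\delta$, and apply the triangle inequality to contradict $2\delta < \|\x - \x'\|$. Your side remark that the hypothesis $\x,\x' \in \I_1$ is not needed for the argument (but could permit a tighter constant) is accurate and a reasonable observation.
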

\begin{proof}
We prove the result by showing a contradiction. Assume that $\x^\star = \x'^\star$, then by definition of $\delta$, we have $\|\x - \x^\star\| + \|\x' - \x^\star\| \le 2\delta$.

On the other hand, with the triangle inequality, we also have $\|\x - \x'\| \le \|\x - \x^\star\| + \|\x' - \x^\star\|$,
which implies $2\delta < \|\x - \x'\| \le \|\x - \x^\star\| + \|\x' - \x^\star\| \le 2\delta$. This is a contradiction. Hence, we have proven the result.
\end{proof}

\begin{figure}
    \centering
    \includegraphics[width=.9\linewidth]{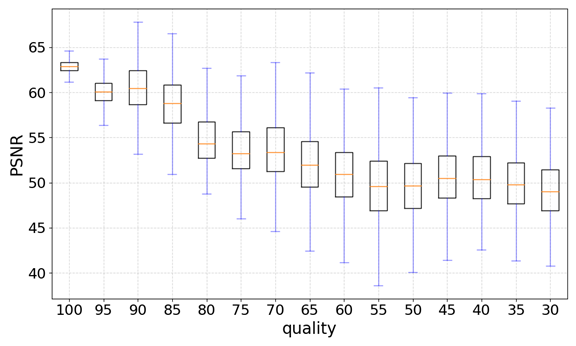}
    ~\vspace{-1em}
    \caption{\it \small {The PSNR between fixed-point image and single JPEG image for different compression quality. }}
    \label{fig:qualaity}
    ~\vspace{-2em}
\end{figure}

\subsection{JPEG Fixed Points of Full-sized Images}

We have established the existence and convergence of fixed points for $8 \times 8$ blocks. For a full grayscale image, each $8 \times 8$ block is processed independently of the others. The JPEG fixed point image is obtained by combining the fixed points of each block. To avoid the need for padding, we crop the image size to a multiple of $8$.

Theorem \ref{thm4} and empirical evaluations in Fig.~\ref{fig:qualaity} illustrate the quality degradation introduced by the JPEG fixed point image compared with an original JPEG image with the same quantization. The results confirm that the quality degradation introduced by the JPEG fixed points is typically less than $25$dB below the threshold of human visual perception.

For RGB images, the color space transform is usually implemented as a one-to-one lookup table. The upsampling and downsampling of the color channels do not interfere with the fixed-point computation because the pixels being downsampled and upsampled do not participate in subsequent computations (the same set of pixels will be downsampled and upsampled without interacting with other pixels).

\section{Tamper-evident JPEG Images}
\label{sec:tamper}
\begin{figure*}[t]
    \centering
    \includegraphics[width=.9\linewidth]{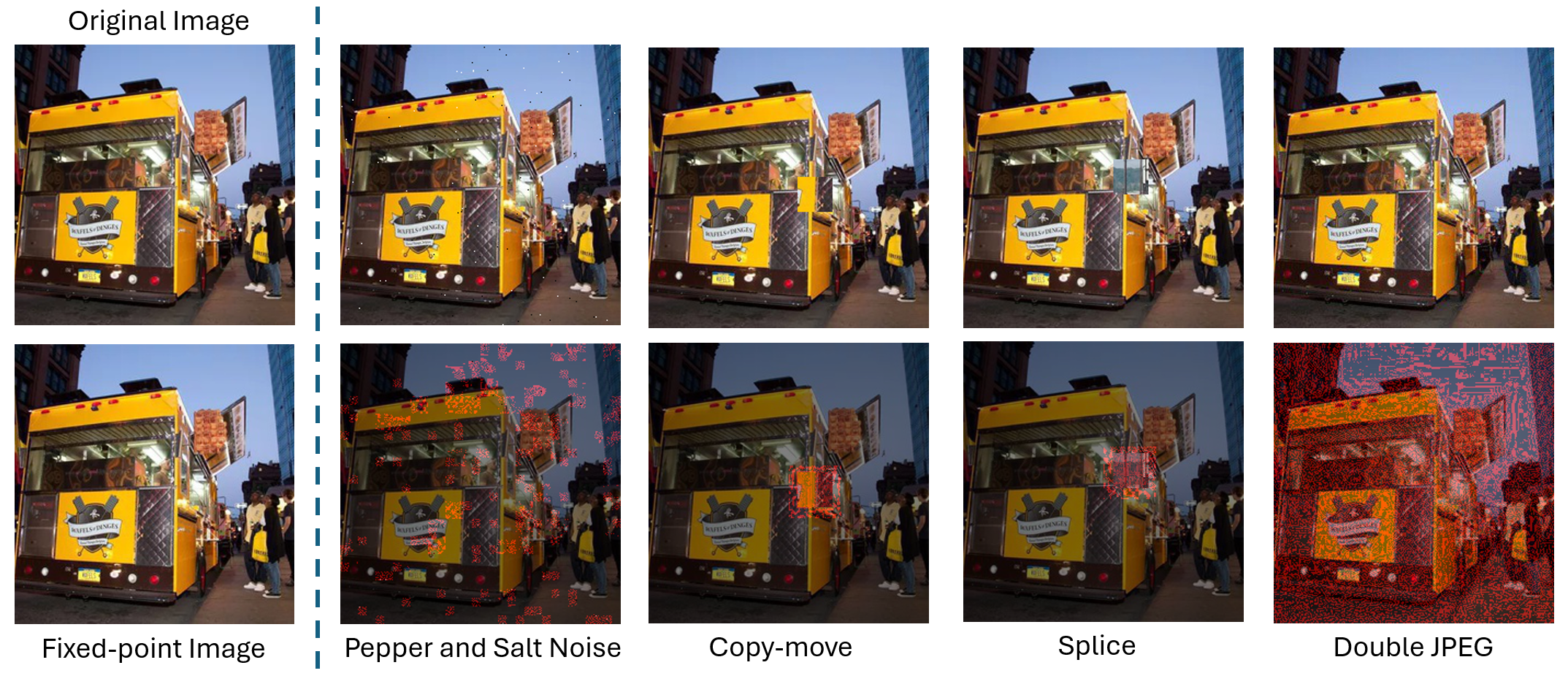}
    ~\vspace{-1em}
    \caption{\it \small Example of fixed points of RGB images and detection and localization of tempering. The images are better viewed in color.
    }
    \label{fig:example_rgb}
    ~\vspace{-2.5em}
\end{figure*}

Digital images comprise more than 90\% of the information we consume online. Digital images have been subject to edits since their inception, ranging from harmless enhancements to deliberate manipulations with malicious intent. It is vital to develop methods to verify the integrity of images to ensure our trust in them.

There are two main strains of image integrity verification methods: watermarking and fingerprinting (controlled capture). {\it Watermarking} entails embedding unique, specially designed signals directly into the image. These signals serve as identifiers, enabling authenticity verification and the detection of alterations. {\it Fingerprinting}, on the other hand, extracts distinctive features or patterns from an image that uniquely identify the original content. These fingerprints are stored and later used to compare with extracted fingerprints from an image to determine whether it undergoes any modifications.

Tamper-evident images \cite{li2014image} present an alternative to watermarking and fingerprinting. A tamper-evident image is its own proof of authenticity—any modification to the image induces detectable changes, making tampering evident. JPEG fixed points provide a means of creating tamper-evident images. Such an image remains unchanged with additional JPEG compression and decompression as long as the same quantization table is used. Any modification or tampering of the image disrupts these fixed points, which can be detected by reapplying the JPEG transform and computing the difference between the block before and after the transformation. Furthermore, identifying blocks with non-zero differences allows for the localization of tampered regions.

The procedure for creating a tamper-evident JPEG image is simple. Starting with the original quantization table, we repeatedly apply JPEG compression and decompression until the JPEG fixed-point image is reached. 
{Fig. \ref{fig:example_rgb} demonstrates the tamper-evident JPEG images. We apply four different operations to the tamper-evident image: 1) paper and salt noise, 2) copy-move, 3) splice, and 4) double JPEG with a different quantization table. These manipulated images are then subjected to an additional JPEG compression, employing the same quantization table as the tamper-evident image. The changed blocks are highlighted in red and overlaid on the manipulated images as masks, shown in the second row of Fig. \ref{fig:example_rgb}. For manipulations 1), 2), and 3), all blocks containing manipulated regions are identified, whereas for the double JPEG image, most pixels in the image differ from the tamper-evident image.}

\section{Related Works}
\label{sec:rel}

Tamper-evident images were first investigated in \cite{li2014image}, where the authors proposed using the fixed points of a Gaussian convolution and deconvolution to create tamper-evident images. However, the theoretical justification for the existence of a fixed point is probabilistic, and for the approach to be practical, the size of the Gaussian convolution kernel must be chosen to be small. Moreover, such a scheme is unlikely to survive JPEG compression, which was not considered in the original work.

As the dominant image format, JPEG's properties have been widely utilized to detect image manipulation. In particular, the pioneering work of \cite{popescu2004statistical} introduced a general approach to detect traces of double JPEG compression to reveal manipulations in an image. The rationale is that when a manipulated JPEG image is saved, it undergoes compression twice—first during the initial encoding and then again after modifications are made and the image is re-saved. This work has inspired a large body of subsequent research (\eg, \cite{verma2024block, niu2021detection, park2018double, huang2010detecting, chen2008machine}) to use double JPEG detection for image forensics.

In the study of double-JPEG detection with the same quantization matrix, JPEG's fixed point property was first observed in \cite{huang2010detecting}. However, to the best of our knowledge, there has not been a formal analysis to theoretically prove this property, which is achieved in our work.


\section{Conclusion}
\label{sec:cond}

In this work, we re-examine a long-standing and intriguing property of JPEG: the existence of fixed points that can be reached after repeatedly applying JPEG transforms. Using a rigorous mathematical formulation, we prove the existence and convergence of JPEG fixed points. Furthermore, we describe a method for creating tamper-evident JPEG images by leveraging the fixed point property.

In future work, we aim to expand the current theoretical results to RGB images, considering color space transforms, downsampling and upsampling steps, and boundary padding. Additionally, we plan to explore the potential applications of JPEG fixed points in detecting image manipulations.
 
\newpage
\bibliographystyle{IEEEbib}
\small
\bibliography{refs}

\end{document}